\title{Disentangled Representation for Causal Mediation Analysis}
\author {
    % Authors
    Ziqi Xu \textsuperscript{\rm 1},
    Debo Cheng \textsuperscript{\rm 2,\rm 1,}\footnote{Corresponding author.},
    Jiuyong Li \textsuperscript{\rm 1,}\footnotemark[1],
    Jixue Liu \textsuperscript{\rm 1},
    Lin Liu \textsuperscript{\rm 1},
    Ke Wang \textsuperscript{\rm 3}
}
\newtheorem{definition}{Definition}
\newtheorem{assumption}{Assumption}
\newtheorem{theorem}{Theorem}
\newcommand*{\indep}{%
	\mathbin{%
		\mathpalette{\@indep}{}%
	}%
}
\newcommand*{\nindep}{%
	\mathbin{% The final symbol is a binary math operator
		\mathpalette{\@indep}{\not}% \mathpalette helps for the adaptation
	}%
}
\newcommand*{\@indep}[2]{%
	% #1: math style
	% #2: empty or \not
	\sbox0{$#1\perp\m@th$}%box 0 contains \perp symbol
	\sbox2{$#1=$}%box 2 for the height of =
	\sbox4{$#1\vcenter{}$}% box 4 for the height of the math axis
	\rlap{\copy0}% first \perp
	\dimen@=\dimexpr\ht2-\ht4-.2pt\relax
	\kern\dimen@
	{#2}%
	\kern\dimen@
	\copy0 %
}
\begin{document}
\maketitle

\begin{abstract}
	Estimating direct and indirect causal effects from observational data is crucial to understanding the causal mechanisms and predicting the behaviour under different interventions. Causal mediation analysis is a method that is often used to reveal direct and indirect effects. Deep learning shows promise in mediation analysis, but the current methods only assume latent confounders that affect treatment, mediator and outcome simultaneously, and fail to identify different types of latent confounders (e.g., confounders that only affect the mediator or outcome). Furthermore, current methods are based on the sequential ignorability assumption, which is not feasible for dealing with multiple types of latent confounders. This work aims to circumvent the sequential ignorability assumption and applies the piecemeal deconfounding assumption as an alternative. We propose the Disentangled Mediation Analysis Variational AutoEncoder (DMAVAE), which disentangles the representations of latent confounders into three types to accurately estimate the natural direct effect, natural indirect effect and total effect. Experimental results show that the proposed method outperforms existing methods and has strong generalisation ability. We further apply the method to a real-world dataset to show its potential application.
\end{abstract}

\section{Introduction}
\label{sec:introduction}
Causal mediation analysis (CMA) aims to reveal how other attributes mediate the causal effect of a treatment on the outcome. It can help understand how the mechanism works and enable outcome prediction under various interventions in practice. In recent years, CMA has been widely used in operational decision~\cite{yin2019identification} and policy evaluation~\cite{ChengG022a}. For example, in the UC Berkeley admission process, it was found that female applicants were more likely to be rejected than male applicants~\cite{bickel1975sex}. However, investigators through CMA understood that the choice of disciplines, as a mediator was affected by gender and affected the acceptance rate. There was in fact no discrimination because female applicants preferred to apply for admission into disciplines with a higher level of competition and lower acceptance rates.  

The above example illustrates three essential ingredients of CMA, the natural direct effect (NDE), the natural indirect effect (NIE) and the total effect (TE), respectively. We use Figure~\ref{pic:intro1} to illustrate the above example in terms of the three types of effects. Here, TE measures the expected increase in the acceptance rate as gender changes from male to female; NDE measures the expected increase in acceptance rate as gender changes from male to female while setting the discipline choice to whatever value it would have attained prior to the change (i.e., under gender being male); NIE measures the expected increase in acceptance rate when gender is male, and discipline choice changes to whatever value it would have attained under gender being female~\cite{pearl2014interpretation}. To be fair for all applicants, the NDE should be zero since gender should not directly affect the acceptance rate, while the NIE is allowed to be non-zero if discipline choice is considered a non-sensitive attribute.
\begin{figure}[t]
	\centering
	\begin{subfigure}[b]{0.22\textwidth}
		\centering
		\includegraphics[scale=0.25]{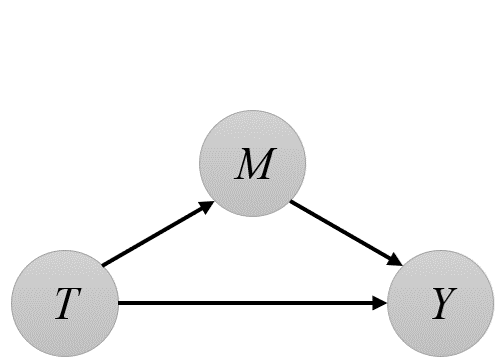}
		\caption{}
		\label{pic:intro1}
	\end{subfigure}
	\begin{subfigure}[b]{0.22\textwidth}
		\centering
		\includegraphics[scale=0.28]{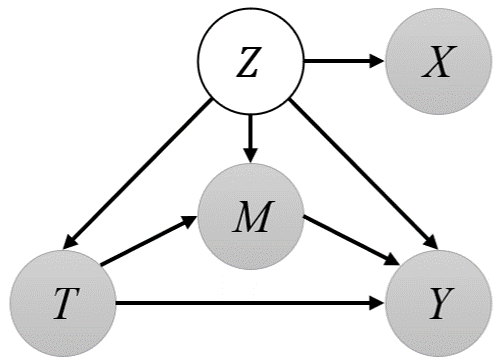}
		\caption{}
		\label{pic:intro3}
	\end{subfigure}
	\caption{(a) A causal graph for the UC Berkeley admission process example. ${T}$ denotes gender (treatment), ${M}$ denotes discipline choice (mediator), and ${Y}$ denotes acceptance rate (outcome). (b) Causal graph for CMAVAE~\cite{ChengG022a}. ${T}$ denotes treatment, ${M}$ denotes mediator, ${Y}$ denotes outcome, ${X}$ denotes proxy attributes and ${Z}$ denotes the representation of latent confounders which affect ${T}$, ${M}$ and ${Y}$ simultaneously.}
	\label{pic:intro}
\end{figure}

The major challenge in CMA is latent confounders. In the above example, an applicant's economic status affects his or her discipline choice and acceptance rate, and without measuring it, we cannot estimate NDE, NIE or TE. Fortunately, leveraging the strength of representation learning, researchers are able to learn the representations of latent confounders from other attributes. For example, the representation of the applicants' economic status can be learned from the collected information of their addresses and parental information. Deep learning has made significant progress in representation learning in the past decade. For example, \citet{KingmaW13} proposed Variational AutoEncoder (VAE) to learn representations with neural networks. \citet{LouizosSMSZW17} first combined causal inference and VAE and proposed CEVAE. CEVAE can learn the representation of latent confounders and inference individual treatment effect (ITE) and average treatment effect (ATE). \citet{ZhangLL21} proposed TEDVAE, which disentangles latent representations into three parts to achieve more accurate estimation of causal effects than CEVAE does. However, both CEVAE and TEDVAE do not consider mediator and cannot estimate NDE or NIE.

\citet{ChengG022a} proposed CMAVAE, which considers mediator and the estimation of NDE and NIE. It is the first work that combines deep learning and CMA, and CMAVAE assumes that the latent confounders affect treatment, mediator and outcome simultaneously as shown in Figure~\ref{pic:intro3}. Their work extended sequential ignorability~\cite{imai2010general} to representation learning and used proxy attributes ${X}$ to learn the representation of latent confounders. However, CMAVAE has limited generalisation ability since it does not consider different types of confounders, which, however, are often seen in practice. For example, the demand of the job market can be considered as a confounder that affects ${M}$ (discipline choice) and ${Y}$ (acceptance rate) in Figure~\ref{pic:intro1}, because the popularity of a job area affects students' choice of disciplines, the university may need to limit the acceptance rate due to the limitation of teaching resources. Furthermore, this confounder does not affect an applicant's gender. CMAVAE cannot handle this case since it assumes a confounder that affects treatment (gender in this example), mediator and outcome simultaneously.

\begin{figure}[t]
	\centering
	\includegraphics[scale=0.28]{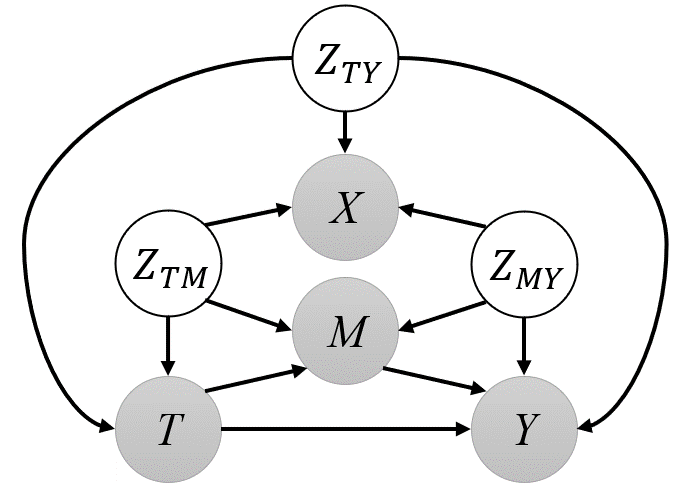}
	\caption{The causal graph for the proposed Disentangled Mediation Analysis Variational AutoEncoder (DMAVAE). ${T}$ is the treatment, ${M}$ is the mediator, ${Y}$ is the outcome and ${X}$ represents the set of the proxy attributes. ${Z_{TM}}$, ${Z_{TY}}$ and ${Z_{MY}}$ denote the disentangled representations of the three types of latent confounders.}
	\label{pic:intro2}
\end{figure}

In this paper, we consider a general case with three types of latent confounders as shown in Figure 2, where ${Z_{TM}}$ is the representation of the latent confounders that affect treatment and mediator; ${Z_{TY}}$ is the representation of the latent confounders that affect treatment and outcome; and ${Z_{MY}}$ is the representation of the latent confounders that affect mediator and outcome. We can access the observed proxy attributes ${X}$ and assume that the proxy attributes and latent confounders are inherently correlated~\cite{angrist2009mostly}. Our proposed disentanglement scheme in Figure~\ref{pic:intro2} follows the piecemeal deconfounding assumption~\cite{pearl2014interpretation}, which is a more relaxed assumption than sequential ignorability. We propose a novel method, namely Disentangled Mediation Analysis Variational AutoEncoder (DMAVAE) to learn the disentangled representations of the three types of latent confounders as illustrated in Figure~\ref{pic:intro2} by using a VAE-based approach, for estimating NDE, NIE and TE. We make the following contributions in this paper: 

\begin{itemize}
	\item We study a general case of CMA under the piecemeal deconfounding assumption, where there are three types of latent confounders.
	
	\item We propose a novel CMA method, DMAVAE, to learn the disentangled representations of the three types of latent confounders from proxy attributes to more accurately estimate NDE, NIE and TE.
	
	\item We evaluate the effectiveness of the DMAVAE method on synthetic datasets. Experiments show that DMAVAE outperforms existing CMA methods and has a strong generalisation ability. Furthermore, we conduct a case study to show the application scenarios of DMAVAE.
\end{itemize}

The rest of this paper is organised as follows. In Section 2, we discuss background knowledge on CMA. The details of DMAVAE are shown in Section 3. In Section 4, we discuss the experiment results. In Section 5, we discuss related works. Finally, we conclude the paper in Section 6.

\section{Background}
\label{sec:Background}
We use upper case letters to represent attributes and lower case letters to represent the values of the attributes. We follow the work in \cite{pearl2014interpretation} to introduce the notations, definitions and assumptions.

\subsection{Notations and Definitions}
Let ${T}_i$ represent the treatment status of the $i$-th individual, with ${T}_i = 1$ indicating the $i$-th individual receiving the treatment and ${T}_i = 0$ otherwise. We use ${M}_i$ to represent the mediator for the $i$-th individual, ${M}_i(t)$ for the value of the mediator when ${T}_i$ is fixed to $t$, where $t\in\{0,1\}$. Similarly, ${Y}_i$ is used to represent the outcome of the $i$-th individual and ${Y}_i(t,m)$ for the outcome when ${T}_i = t$ and ${M}_i = m$. Finally, we use ${X}_i$ to denote the set of proxy attributes. When the context is clear we omit the subscript $i$.
\begin{definition}[Total Effect~\cite{pearl2014interpretation}]
	\label{def:TE}
	\begin{equation}
		\begin{aligned}
			\mathrm{TE} &= \mathbb{E}[{Y}({1,M(1)}) - {Y}({0,M(0)})].
		\end{aligned}
	\end{equation}
\end{definition}

As stated in~\cite{pearl2014interpretation}, TE is the measure of ``the expected increase in the outcome ${Y}$ when the treatment changes from ${T} = 0$ to ${T} = 1$, while the mediator is allowed to track the change in ${T}$."
\begin{definition}[Natural Direct Effect~\cite{pearl2014interpretation}]
	\label{def:NDE}
	\begin{equation}
		\begin{aligned}
			\mathrm{NDE} = \mathbb{E}[{Y}({1,M(0)}) - {Y}({0,M(0)})].
		\end{aligned}
	\end{equation}
\end{definition}

Based on~\cite{pearl2014interpretation}, NDE is the measure of ``the expected increase in the outcome ${Y}$ that the treatment changes from ${T} = 0$ to ${T} = 1$, while setting the mediator to whatever value it would have attained (for each individual) prior to the change, i.e., under ${T} = 0$."
\begin{definition}[Natural Indirect Effect~\cite{pearl2014interpretation}]
	\label{def:NIE}
	\begin{equation}
		\begin{aligned}
			\mathrm{NIE} = \mathbb{E}[{Y}({0,M(1)}) - {Y}({0,M(0)})].
		\end{aligned}
	\end{equation}
\end{definition}

As explained in~\cite{pearl2014interpretation}, NIE is the measure of ``the expected increase in the outcome ${Y}$ when the treatment is held constant, at ${T} = 0$, and ${M}$ changes to whatever value it would have attained (for each individual) under ${T} = 1$."

Additionally, TE can be decomposed as follows:
\begin{equation}
	\label{equ:All}
	\begin{aligned}
		\mathrm{TE} = \mathrm{NDE} - \mathrm{NIE_r},
	\end{aligned}
\end{equation}where $\mathrm{NIE_r}$ denotes the reverse version of $\mathrm{NIE}$, as follows:
\begin{equation}
	\label{def:NIEr}
	\begin{aligned}
		\mathrm{NIE_r} = \mathbb{E}[{Y}({1,M(0)}) - {Y}({1,M(1)})].
	\end{aligned}
\end{equation}

This implies that $\mathrm{TE}$ is identifiable whenever $\mathrm{NDE}$ and $\mathrm{NIE_r}$ are identifiable~\cite{pearl2014interpretation}. Note that NDE and NIE are defined in counterfactual expressions. To allow these counterfactual expressions to be estimated, \citet{pearl2014interpretation} proposed three steps for the counterfactual derivation of NDE and NIE: First, we find an adjustment set under suitable assumptions; Second, we reduce all counterfactuals to do-expressions by using the adjustment set found; Finally, we identify the do-expressions from the data. 

\subsection{Assumptions}
\begin{assumption}[Sequential Ignorability~\cite{imai2010general}]
	\label{def:setA}
	There exists a set ${W}$ of measured attributes such that:
	\begin{enumerate}
		\item ${W}$ and ${T}$ deconfound the mediator-outcome relationship, keeping ${T}$ fixed: $[{Y}(t',m) \indep {M}(t) | {T}=t, {W}=w];$
		
		\item ${W}$ deconfounds the treatment-\{mediator, outcome\} relationship: $[{T} \indep ({Y}(t',m),{M}(t)) | {W}=w]$,
	\end{enumerate}
	where~{\small  $0<p({M}=m|{do}({T}=t,{W}=w))<1$, $0 < p({Y} =y|{do}({T}=t,{M}=m,{W}=w)) < 1$}, and $t,t' \in \{0,1\}$.
\end{assumption}

\citet{pearl2014interpretation} proposed the following assumption, which is more relaxed than Assumption~\ref{def:setA}. 

\begin{assumption}[Piecemeal Deconfounding~\cite{pearl2014interpretation}]
	\label{def:setC}
	There exists three sets of measured attributes ${W} = \{{W_1},{W_2},{W_3}\}$ such that:
	\begin{enumerate}
		\item No member of ${W_1}$ is affected by the treatment$;$
		
		\item ${W_1}$ deconfounds the ${M} \to {Y}$ relationship (holding ${T}$ constant)$;$
		
		\item $\{{W_2},{W_1}\}$ deconfounds the ${T} \to {M}$ relationship$;$
		
		\item $\{{W_3},{W_1}\}$ deconfounds the $\{{T},{M}\} \to {Y}$ relationship.
	\end{enumerate}
\end{assumption}

We note that Assumption~\ref{def:setC} follows the idea of divide-and-conquer, and allows the use of different sets of attributes to deconfound the mediator and outcome processes separately, rather than having to use the same set of deconfounding attributes as required by Assumption~\ref{def:setA}~\cite{pearl2014interpretation}. 

\section{Method}
\label{sec:Method}

\subsection{CMA via Representations}
As mentioned above, the piecemeal deconfounding assumption is more relaxed than the sequential ignorability assumption. Our goal is to extend the piecemeal deconfounding assumption to disentangled representation learning. For this, we propose the causal graph for the disentanglement as shown in Figure~\ref{pic:intro2}, and introduce the following assumption for the disentangled representations.
\begin{assumption}[Assumption for DMAVAE]
	\label{assumption:DMAVAE}
	~~~~~~
	\begin{enumerate}
		\item There exist three types of confounders, i.e., ${Z_{TM}}$, ${Z_{TY}}$ and ${Z_{MY}}$ as shown in Figure~\ref{pic:intro2}.
		
		\item There exist proxy attributes ${X}$ that approximate ${Z_{TM}}$, ${Z_{TY}}$ and ${Z_{MY}}$.
		
		\item $p({T},{M},{Y},{X},{Z_{TM}}, {Z_{TY}}, {Z_{MY}})$ can be recovered from all observed attributes $({T},{M},{Y},{X})$. 
	\end{enumerate}
\end{assumption}

For our proposed disentanglement, we have the following theorem.
\begin{theorem}
	\label{observation:DMAVAE}
	The disentangled representation ${Z_{MY}}$ following Figure~\ref{pic:intro2} satisfies Conditions 1 and 2 of Assumption~\ref{def:setC}.
\end{theorem}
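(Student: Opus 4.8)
The plan is to verify the two conditions directly from the topology of the causal graph in Figure~\ref{pic:intro2}, reading Conditions~1 and~2 of Assumption~\ref{def:setC} as graphical (d-separation) conditions with $W_1$ taken to be $Z_{MY}$.

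For Condition~1 (no member of $W_1$ is affected by the treatment), I would observe that in Figure~\ref{pic:intro2} the node $Z_{MY}$ is a latent confounder having only outgoing edges (to $M$, $Y$ and the proxy $X$) and no incoming edge; hence there is no directed path from $T$ to $Z_{MY}$, so $T$ does not affect $Z_{MY}$, which is Condition~1.

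For Condition~2 ($W_1$ deconfounds the $M\to Y$ relationship holding $T$ constant), the plan is to show that $\{Z_{MY},T\}$ satisfies the back-door criterion for the pair $(M,Y)$ in the submodel in which $T$ has been fixed by intervention, i.e.\ in the mutilated graph $G_{\overline{T}}$ obtained from Figure~\ref{pic:intro2} by deleting the arrows entering $T$; equivalently, the target is the counterfactual independence $[{Y}(t,m) \indep {M}(t) \mid {T}=t,\, {Z_{MY}}]$. The steps are: (i) note that in $G_{\overline{T}}$ the edges $Z_{TM}\to T$ and $Z_{TY}\to T$ are absent, so neither $Z_{TM}$ nor $Z_{TY}$ can lie on a path between $M$ and $Y$ except by routing through the proxy $X$; (ii) enumerate the remaining paths between $M$ and $Y$ in $G_{\overline{T}}$; (iii) verify that each such back-door path is blocked --- either because it passes through the fork $T$ or through the fork $Z_{MY}$, both of which are in the conditioning set, or because it is routed through $X$, which is a collider (a common child of $Z_{TM}$, $Z_{TY}$ and $Z_{MY}$) and is not conditioned on; (iv) observe that neither $Z_{MY}$ nor $T$ is a descendant of $M$, and conclude that $\{Z_{MY},T\}$ blocks every back-door path from $M$ to $Y$ in $G_{\overline{T}}$, which is Condition~2.

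The step I expect to be the main obstacle is the collider at $T$: in the full graph of Figure~\ref{pic:intro2} the path ${M}\leftarrow{Z_{TM}}\to{T}\leftarrow{Z_{TY}}\to{Y}$ would be \emph{opened} by conditioning on $T$, so that $Z_{MY}$ by itself would not deconfound $M\to Y$. The point that must be argued carefully is therefore that holding $T$ constant should be read as setting $T$ by intervention rather than as conditioning in the observational graph; this severs the arrows entering $T$ and removes the offending path from $G_{\overline{T}}$. One must also flag the harmless collider $X$ so that the paths routed through the proxy are correctly declared blocked. Once these two collider points are handled, the remaining path enumeration is routine.
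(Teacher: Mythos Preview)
Your proposal is correct and follows essentially the same route as the paper: verify Condition~1 by noting that $Z_{MY}$ has no incoming edges (hence is a non-descendant of $T$), and verify Condition~2 by enumerating the back-door paths from $M$ to $Y$ and checking that each is blocked either by $Z_{MY}$ or by the unconditioned collider $X$. The paper's proof lists the same four $X$-routed and $Z_{MY}$-routed paths you will end up with.

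The one point where you go beyond the paper is worth noting. You explicitly flag the path ${M}\leftarrow{Z_{TM}}\to{T}\leftarrow{Z_{TY}}\to{Y}$ and resolve it by reading ``holding $T$ constant'' as an intervention, i.e.\ by working in $G_{\overline{T}}$. The paper's proof simply omits all paths through $T$ from its enumeration, which amounts to the same interpretation but leaves it implicit. Your explicit treatment is cleaner: it makes clear \emph{why} the collider at $T$ does not undermine the argument, whereas a reader of the paper's proof who interprets ``holding $T$ constant'' as ordinary conditioning might wonder why that path was not listed. Both arguments arrive at the same conclusion by the same mechanism; yours just closes a gap in the exposition.
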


\begin{proof}
	${Z_{MY}}$ is a non-descendant of ${T}$ as shown in Figure~\ref{pic:intro2}, hence Condition 1 of Assumption~\ref{def:setC} is satisfied (considering ${Z_{MY}}$ as ${W_1}$). All the back-door paths between ${M}$ and ${Y}$ when holding ${T}$ constant are blocked by ${Z_{MY}}$, i.e., ${M} \gets {Z_{MY}} \to {Y}$ is blocked by ${Z_{MY}}$, ${M} \gets {Z_{MY}} \to {X} \gets {Z_{TY}} \to {Y}$,  ${M} \gets {Z_{TM}} \to {X} \gets {Z_{TY}} \to {Y}$ and ${M} \gets {Z_{TM}} \to {X} \gets {Z_{MY}} \to {Y}$ are all blocked by $\emptyset$ since ${X}$ is a collider. Hence, Condition 2 of Assumption 2 is also satisfied when considering ${Z_{MY}}$ as ${W_1}$.
\end{proof}

Based on Theorem~\ref{observation:DMAVAE} and Equation 13 in~\cite{pearl2014interpretation}, the counterfactual expression of NDE (Equation~\ref{def:NDE}) and $\mathrm{NIE_r}$ (Equation~\ref{def:NIEr}) can be reduced to do-expressions as follows:
{\small \begin{equation}
		\label{NDE_do}
		\begin{aligned}
			\mathrm{NDE} =~ &[\mathbb{E}({Y}| {do}({T}=1,{M}=m), {Z_{MY}}={z_{MY}})\\ &-\mathbb{E}({Y}| {do}({T}=0,{M}=m), {Z_{MY}}={z_{MY}})] \\
			&\times p({M}=m|{do}({T}=0), {Z_{MY}}={z_{MY}})\\
			&\times p({Z_{MY}}={z_{MY}}),\\
		\end{aligned}
	\end{equation}
	\begin{equation}
		\label{NIE_do}
		\begin{aligned}
			\mathrm{NIE_r} =~ &\mathbb{E}({Y}| {do}({T}=1,{M}={m}), {Z_{MY}}={z_{MY}})\\
			&\times [p({M}={m}|{do}({T}=0), {Z_{MY}}={z_{MY}})\\ &- p({M}={m}|{do}({T}=1), {Z_{MY}}={z_{MY}})]\\
			&\times p({Z_{MY}}={z_{MY}}),
		\end{aligned}
\end{equation}}

Following Pearl's back-door adjustment formula~\cite{pearl2009causal, pearl2009causality}, we propose the following theorem which is used to reduce the do-expressions to probability expressions.
\begin{figure*}[t]
	\centering
	\includegraphics[scale=0.40]{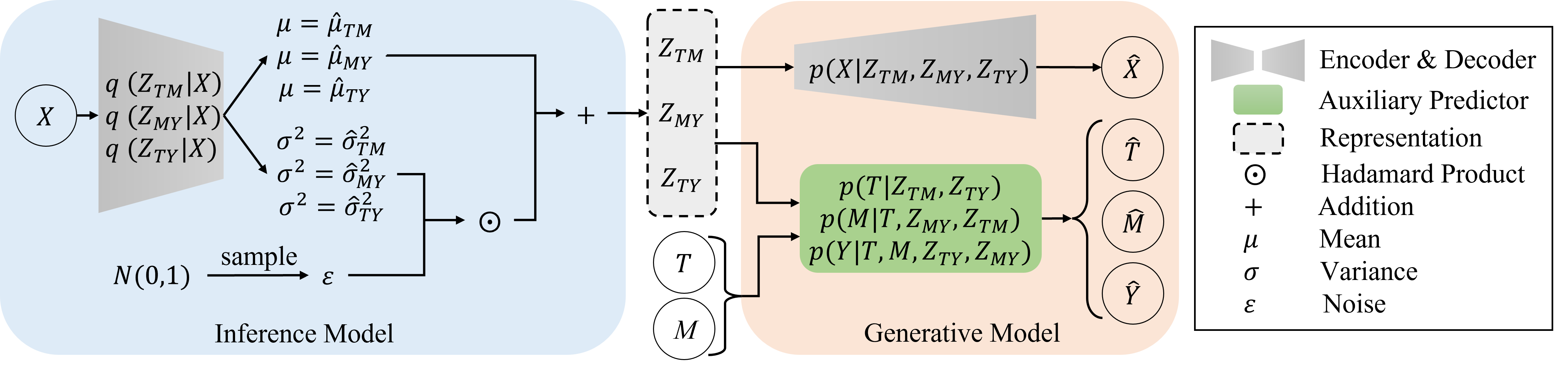}
	\caption{The architecture of Disentangled Mediation Analysis Variational AutoEncoder (DMAVAE).}
	\label{pic:DMAVAE}
\end{figure*}
\begin{theorem}
	\label{theory:1}If we can estimate $p({T},{M},{Y},{X},{Z_{TM}}, {Z_{TY}},\\{Z_{MY}})$ under the causal graph in Figure~\ref{pic:intro2}, $\mathrm{NDE}$ and $\mathrm{NIE_r}$ can be identified from data as follows:
	{\small \begin{equation}
			\label{equation:NDE}
			\begin{aligned}
				\mathrm{NDE} =~ &[\mathbb{E}({Y}| {T}=1,{M}={m}, {Z_{MY}}={z_{MY}}, {Z_{TY}}={z_{TY}})\\ &-\mathbb{E}({Y}| {T}=0,{M}={m}, {Z_{MY}}={z_{MY}}, {Z_{TY}}={z_{TY}})] \\
				&\times p({M}={m}|{T}=0, {Z_{MY}}={z_{MY}}, {Z_{TM}}={z_{TM}})\\
				&\times p({Z_{MY}}={z_{MY}}, {Z_{TM}}={z_{TM}}, {Z_{TY}}={z_{TY}}),\\
			\end{aligned}
		\end{equation}
		\begin{equation}
			\label{equation:NIE}
			\begin{aligned}
				\mathrm{NIE_r} =~ &\mathbb{E}({Y}| {T}=1,{M}={m}, {Z_{MY}}={z_{MY}}, {Z_{TY}}={z_{TY}})\\
				&\times [p({M}={m}|{T}=0, {Z_{MY}}={z_{MY}}, {Z_{TM}}={z_{TM}})\\ &- p({M}={m}|{T}=1, {Z_{MY}}={z_{MY}}, {Z_{TM}}={z_{TM}})]\\
				&\times p({Z_{MY}}={z_{MY}}, {Z_{TM}}={z_{TM}}, {Z_{TY}}={z_{TY}}).
			\end{aligned}
	\end{equation}}
\end{theorem}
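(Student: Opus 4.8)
The plan is to transform the do-expressions for $\mathrm{NDE}$ and $\mathrm{NIE_r}$ in Equations~\ref{NDE_do} and~\ref{NIE_do} — which still contain the operators $do(T=t)$ and $do(T=t,M=m)$ together with conditioning on $Z_{MY}$ — into purely observational conditional expectations and probabilities by applying Pearl's back-door (adjustment-criterion) formula in the causal graph of Figure~\ref{pic:intro2}. There are only two kinds of do-terms to eliminate: the outcome terms $\mathbb{E}(Y\mid do(T=t,M=m),Z_{MY}=z_{MY})$ and the mediator terms $p(M=m\mid do(T=t),Z_{MY}=z_{MY})$, so I would treat them one at a time and then reassemble.

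For the outcome terms I would argue that $\{Z_{MY},Z_{TY}\}$ is an admissible adjustment set for the (joint) effect of $\{T,M\}$ on $Y$: (i) $Z_{MY}$ and $Z_{TY}$ are root nodes, hence not descendants of $T$ or $M$; and (ii) conditioning on $\{Z_{MY},Z_{TY}\}$ blocks every (proper) back-door path from $T$ or $M$ to $Y$. The enumeration is short: $T\gets Z_{TY}\to Y$ and $M\gets Z_{MY}\to Y$ are blocked by the conditioned non-colliders $Z_{TY}$ and $Z_{MY}$; every path routed through the proxy $X$ (such as $T\gets Z_{TM}\to X\gets Z_{TY}\to Y$ or $M\gets Z_{TM}\to X\gets Z_{MY}\to Y$) is blocked because $X$ is an unconditioned collider; and the path $M\gets Z_{TM}\to T\to Y$ is neutralised once the first edge of the proper causal path $T\to Y$ is removed (intuitively, the intervention severs $T$ from its causes), leaving the residual segment $M\gets Z_{TM}\to T\gets Z_{TY}\to Y$ blocked by the collider at $T$. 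This yields $\mathbb{E}(Y\mid do(T=t,M=m),z_{MY})=\sum_{z_{TY}}\mathbb{E}(Y\mid T=t,M=m,z_{MY},z_{TY})\,p(z_{TY}\mid z_{MY})$. Symmetrically, for the mediator terms I would show $\{Z_{MY},Z_{TM}\}$ is admissible for the effect of $T$ on $M$: the only back-door path not killed by the collider $X$ (or by the collider $Y$) is $T\gets Z_{TM}\to M$, which is blocked by conditioning on $Z_{TM}$, giving $p(M=m\mid do(T=t),z_{MY})=\sum_{z_{TM}}p(M=m\mid T=t,z_{MY},z_{TM})\,p(z_{TM}\mid z_{MY})$.

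The remaining work is algebraic. Because $Z_{TM}$, $Z_{TY}$ and $Z_{MY}$ are mutually non-adjacent roots of the DAG in Figure~\ref{pic:intro2}, the Markov factorisation makes them mutually independent, so $p(z_{TY}\mid z_{MY})=p(z_{TY})$, $p(z_{TM}\mid z_{MY})=p(z_{TM})$, and $p(z_{MY})\,p(z_{TM})\,p(z_{TY})=p(z_{MY},z_{TM},z_{TY})$. Substituting the two adjustment formulas above into Equations~\ref{NDE_do} and~\ref{NIE_do}, relabelling the dummy summation variable $z_{TY}$ so that the two outcome expectations combine into one sum, and folding the product of the three marginals into the joint $p(z_{MY},z_{TM},z_{TY})$ reproduces Equations~\ref{equation:NDE} and~\ref{equation:NIE}. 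Finally, since the hypothesis grants access to $p(T,M,Y,X,Z_{TM},Z_{TY},Z_{MY})$ (Assumption~\ref{assumption:DMAVAE}), every conditional expectation, conditional probability and marginal on the right-hand sides is obtainable by conditioning and marginalisation, so $\mathrm{NDE}$ and $\mathrm{NIE_r}$ — and hence, via Equation~\ref{equ:All}, $\mathrm{TE}$ — are identified.

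I expect the main obstacle to be the careful verification of the adjustment criterion for the joint intervention $do(T=t,M=m)$: one must confirm that conditioning on the co-parents $Z_{MY}$ and $Z_{TY}$ of the collider $X$ never opens a path through $X$, that the paths which would have to traverse the already-intervened node $T$ are correctly discounted (this is where the proper back-door graph matters), and — in order to land exactly on Equations~\ref{equation:NDE}--\ref{equation:NIE} rather than a coarser formula — that $Z_{TY}$ but not $Z_{TM}$ is what the outcome factor requires while $Z_{TM}$ but not $Z_{TY}$ is what the mediator factor requires. Once those path analyses are pinned down, the rest (independence of the roots, recombination of the sums) is routine.
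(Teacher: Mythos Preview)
Your proposal is correct and follows essentially the same route as the paper: enumerate back-door paths in Figure~\ref{pic:intro2} to show that $\{Z_{MY},Z_{TY}\}$ is a valid adjustment set for $do(T,M)$ on $Y$ and $\{Z_{MY},Z_{TM}\}$ is a valid adjustment set for $do(T)$ on $M$, then substitute the resulting do-free expressions into Equations~\ref{NDE_do}--\ref{NIE_do}. You are in fact more careful than the paper on two points it leaves implicit --- the handling of the path $M\gets Z_{TM}\to T\to Y$ under the joint intervention, and the mutual independence of the root nodes $Z_{TM},Z_{TY},Z_{MY}$ needed to collapse $p(z_{TY}\mid z_{MY})p(z_{TM}\mid z_{MY})p(z_{MY})$ into the joint $p(z_{MY},z_{TM},z_{TY})$.
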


\begin{proof}  
	We prove that Equations~\ref{NDE_do} and~\ref{NIE_do} can be identified, i.e., the do-expressions can be converted to the do-free expressions in Equations~\ref{equation:NDE} and~\ref{equation:NIE}, respectively. For this, we prove that $p({M}={m}|{do}({T}=0), {Z_{MY}}={z_{MY}})$ and $p({Y}=y| {do}({T}=0,{M}={m}), {Z_{MY}}={z_{MY}})$ are identifiable. The case for ${T}=1$ is identical and the expectation expressions with the do-operators are identifiable when the corresponding probability expressions are identifiable. Once $\mathrm{NDE}$ and $\mathrm{NIE_r}$ are identified, TE is calculated as shown in Equation~\ref{equ:All}.
	
	$p({M}={m}|{do}({T}=0), {Z_{MY}}={z_{MY}})$ is identifiable by adjusting ${Z_{TM}}$, since all the back-door paths between ${T}$ and ${M}$ when conditioning on ${Z_{MY}}$ are blocked by ${Z_{TM}}$, i.e., ${T} \gets {Z_{TM}} \to {M}$ is blocked by ${Z_{TM}}$, ${T} \gets {Z_{TM}} \to {X} \gets {Z_{TY}} \to {Y} \gets {M}$, ${T} \gets {Z_{TY}} \to {X} \gets {Z_{TM}} \to {M}$ and ${T} \gets {Z_{TY}} \to {Y} \gets {M}$ are blocked by $\emptyset$ since ${X}$ or ${Y}$ is a collider. Hence, $p({M}={m}|{do}({T}=0), {Z_{MY}}={z_{MY}}) = p({M}={m}|{T}=0, {Z_{MY}}={z_{MY}}, {Z_{TM}}={z_{TM}})$.
	
	$p({Y}=y| {do}({T}=0,{M}={m}), {Z_{MY}}={z_{MY}})$ is identifiable by adjusting ${Z_{TY}}$, since all the back-door paths between $\{{T},{M}\}$ and ${Y}$ when conditioning on ${Z_{MY}}$ are blocked by ${Z_{TY}}$, i.e., ${T} \gets {Z_{TY}} \to {Y}$ is blocked by ${Z_{TY}}$, ${T} \gets {Z_{TM}} \to {X} \gets {Z_{TY}} \to {Y}$ and ${M} \gets {Z_{TM}} \to {X} \gets {Z_{TY}} \to {Y}$ are blocked by $\emptyset$ since ${X}$ is a collider. Hence, $p({Y}=y| {do}({T}=0,{M}={m}), {Z_{MY}}={z_{MY}}) = p({Y}| {T}=0,{M}={m}, {Z_{MY}}={z_{MY}}, {Z_{TY}}={z_{TY}})$.
\end{proof}

\subsection{Learning Disentangled Representations}
In the previous discussion, we have assumed that ${X}$ contains (or captures) the information of the latent confounders, which can be disentangled into three types, i.e., ${Z_{TM}}$, ${Z_{TY}}$ and ${Z_{MY}}$. With these representations, we can estimate $\mathrm{NDE}$ and $\mathrm{NIE_r}$ by Equations~\ref{equation:NDE} and \ref{equation:NIE}, respectively. 

To learn these representations, we propose DMAVAE based on Variational Autoencoder (VAE)~\cite{KingmaW13}, which uses variational inference to learn representations. DMAVAE parameterises the causal graph in Figure~\ref{pic:intro2} as a model with neural network functions connecting the attributes of interest. The objective function of DMAVAE is the reconstruction error of the observed $({X},{T},{M},{Y}) $ and the inferred $(\hat{{X}},\hat{{T}},\hat{{M}},\hat{{Y}})$. The architecture of DMAVAE is shown in Figure~\ref{pic:DMAVAE}. We note that ${Y}$ is not shown in Figure~\ref{pic:DMAVAE} since ${Y}$ is not used as an input to the generative model.

In the inference model, we design three separate encoders $q({Z_{TM}} | {X})$, $q({Z_{MY}} | {X})$ and $q({Z_{TY}} | {X})$ that serve as variational posteriors over the three representations. The variational approximations of the posteriors are defined as:

~~~{\small $q({Z_{TM}}|{X}) = \prod_{j=1}^{D_{{Z_{TM}}}}\mathcal{N}(\mu = \hat{\mu}_{{TM}_j}, \sigma^2 = \hat{\sigma}^2_{{TM}_j})$};\\[0.02cm]

~~~{\small $q({Z_{MY}}|{X}) = \prod_{j=1}^{D_{{Z_{MY}}}}\mathcal{N}(\mu = \hat{\mu}_{{MY}_j}, \sigma^2 = \hat{\sigma}^2_{{MY}_j})$};\\[0.02cm]

~~~~{\small $q({Z_{TY}}|{X}) = \prod_{j=1}^{D_{{Z_{TY}}}}\mathcal{N}(\mu = \hat{\mu}_{{TY}_j}, \sigma^2 = \hat{\sigma}^2_{{TY}_j})$},\\[0.02cm]
where $\hat{\mu}_{{TM}},\hat{\mu}_{{MY}},\hat{\mu}_{{TY}}$ and $\hat{\sigma}^2_{{TM}},\hat{\sigma}^2_{{MY}},\hat{\sigma}^2_{{TY}}$ are the means and variances of the Gaussian distributions parameterised by neural networks. $D_{{Z_{TM}}}$,  $D_{{Z_{MY}}}$ and $D_{{Z_{TY}}}$ are the dimensions of ${Z_{TM}}$, ${Z_{TM}}$ and ${Z_{TY}}$, respectively.

The generative models for ${T}$ and ${X}$ are defined as:

~~~~~~~{\small $p({T}|{Z_{TM}}, {Z_{TY}}) = \mathrm{Bern}(\sigma(g({Z_{TM}}, {Z_{TY}})))$};\\[0.1cm]
{\small $p({X}|{Z_{TM}}, {Z_{TY}}, {Z_{MY}}) = \prod_{j=1}^{D_{{X}}} p({X}_j|{Z_{TM}}, {Z_{TY}}, {Z_{MY}})$},\\[0.1cm]
where $g(\cdot)$ is a neural network parameterised by its own parameters, $\sigma(\cdot)$ is the logistic function and $D_{{X}}$ is the dimension of ${X}$.

Specifically, the generative models for ${M}$ and ${Y}$ vary depending on the types of the attribute values. For continuous ${M}$ and ${Y}$, the models are defined as:
\begin{equation*}
	{\small \begin{aligned}
		p({M}|{T}, {Z_{TM}}, {Z_{MY}}) &= \mathcal{N}(\mu = \hat{\mu}_{{M}}, \sigma^2 = \hat{\sigma}^2_{{M}});\\
		\hat{\mu}_{{M}} = ({T} g({Z_{TM}}, {Z_{MY}}) &+ (1-{T})g({Z_{TM}}, {Z_{MY}}));\\
		\hat{\sigma}^2_{{M}} = ({T}g({Z_{TM}}, {Z_{MY}}) &+ (1-{T})g({Z_{TM}}, {Z_{MY}})),
	\end{aligned}	}
\end{equation*}
\begin{equation*}
	{\small \begin{aligned}
		p({Y}|{T}, {M}, {Z_{TY}}, {Z_{MY}}) &= \mathcal{N}(\mu = \hat{\mu}_{{Y}}, \sigma^2 = \hat{\sigma}^2_{{Y}});\\
		\hat{\mu}_{{Y}} = ({T}g({M}, {Z_{TY}}, {Z_{MY}}) &+ (1-{T})g({M}, {Z_{TY}}, {Z_{MY}}));\\
		\hat{\sigma}^2_{{Y}} = ({T}g({M}, {Z_{TM}}, {Z_{MY}}) &+ (1-{T})g({M}, {Z_{TM}}, {Z_{MY}})),
	\end{aligned}	}
\end{equation*}

For binary ${M}$ and ${Y}$ the models are defined as:
\begin{equation*}
	{\small \begin{aligned}
		p({M}|{T}, {Z_{TM}}, {Z_{MY}}) &= \mathrm{Bern}(\sigma(g({T}, {Z_{TM}}, {Z_{MY}})));\\
		p({Y}|{T}, {M}, {Z_{TY}}, {Z_{MY}}) &= \mathrm{Bern}(\sigma(g({T}, {M}, {Z_{TY}}, {Z_{MY}}))).
	\end{aligned}	}
\end{equation*}

Following the same setting in~\cite{KingmaW13}, we choose Gaussian distribution as the prior distributions of the representations, which are defined as: 

~~~~~~~~~~~~~~~~{\small $p({Z_{TM}}) = \prod_{j=1}^{D_{{Z_{TM}}}}\mathcal{N}(z_{{{TM}}_j} | 0,1)$};\\[0.05cm]

~~~~~~~~~~~~~~~~{\small $p({Z_{MY}}) = \prod_{j=1}^{D_{{Z_{MY}}}}\mathcal{N}(z_{{{MY}}_j} | 0,1)$};\\[0.05cm]

~~~~~~~~~~~~~~~~~{\small $p({Z_{TY}}) = \prod_{j=1}^{D_{{Z_{TY}}}}\mathcal{N}(z_{{{TY}}_j} | 0,1)$}.
\\

We can now form the evidence lower bound (ELBO) for the above inference and generative networks:
\begin{equation}
	{\small \begin{aligned}
		\mathcal{M} =~ &\mathbb{E}_{q}[\log p({X}|{Z_{TM}}, {Z_{TY}}, {Z_{MY}})] \\&- D_{\mathrm{KL}}[q({Z_{TM}}|{X})||p({Z_{TM}})] \\&- D_{\mathrm{KL}}[q({Z_{MY}}|{X})||p({Z_{MY}})] \\&- D_{\mathrm{KL}}[q({Z_{TY}}|{X})||p({Z_{TY}})],
		\label{eqa:ELBO1}
	\end{aligned}	}
\end{equation}where $\mathbb{E}_{q}$ stands for $\mathbb{E}_{q({Z_{TM}}|{X})q({Z_{MY}}|{X})q({Z_{TY}}|{X})}$. The first term denotes the reconstruction loss between ${X}$ and $\hat{{X}}$ ; each of the other terms is used to calculate the $\mathrm{KL}$ divergence between the prior knowledge and the learned representations.

Following the works in~\cite{LouizosSMSZW17, ZhangLL21}, we design three auxiliary predictors to encourage the disentanglement of the representations. To optimise the parameters in the auxiliary predictors, we add them to the loss function. Since maximising $\mathcal{M}$ is equal to minimising $-\mathcal{M}$, the final loss function of DMAVAE is defined as:
\begin{equation}
{\small 	\begin{aligned}
		\mathcal{L}_{\text{DMAVAE}} =~& -\mathcal{M} + \mathrm{log} q({T}|{Z_{TM},{Z_{TY}}}) \\&+ \mathrm{log} q({M}|{T}, \mathrm{Z_{TM},{Z_{MY}}}) \\&+ \mathrm{log} q({Y}|{T}, {M}, {Z_{TY},{Z_{MY}}}).
	\end{aligned}	}
\end{equation}

\section{Experiments}
\label{sec:Experiments}
Evaluating estimated causal effects has always been a huge challenge because we have no access to ground truth in practice~\cite{guo2020survey}. Evaluating the results of CMA is considered a more difficult task, as it requires accurate estimation of causal effects with respect to different paths. Therefore the evaluation of causal effect estimation and CMA mainly relies on simulated datasets~\cite{huber2016finite}. Following the works in~\cite{LouizosSMSZW17, ChengG022a}, we construct simulated datasets to evaluate the performance of DMAVAE and compare it with the baseline methods. We also evaluate the generalisation ability of DMAVAE. Finally, we apply DMAVAE to a real-world dataset and verify its usability. The code is available at \url{https://github.com/IRON13/DMAVAE}.

\subsection{Experiment Setup}
\subsubsection{Methods for Comparison}
\label{sec:Framework}
We compare DMAVAE with traditional and deep learning-based CMA methods. For traditional CMA methods, we select the commonly used parametric methods LSEM~\cite{baron1986moderator}, LSEM-I~\cite{imai2010general}, NEW-I, NEW-W~\cite{lange2012simple}, and semi-parametric method IPW~\cite{robins1994estimation,huber2014identifying}. For deep learning-based CMA methods, we compare our method with CMAVAE~\cite{ChengG022a}, which is the only VAE-based CMA method. Since there are no other deep learning-based CMA methods, we select two well-known causal effect estimators (CEVAE~\cite{LouizosSMSZW17} and TEDVAE~\cite{ZhangLL21}) as baselines for TE estimation only.

\subsubsection{Evaluation Criterion}
For evaluating the performance of DMAVAE and the baselines, we use the estimation bias $|(\hat{\beta} - \beta)/\beta|\times100\%$ as the metric, where $\hat{\beta}$ is the estimated results and $\beta$ is the ground truth. 

\subsubsection{Implementation Details}
We use PyTorch~\cite{NEURIPS2019_9015} and Pyro~\cite{bingham2018pyro} to implement DMAVAE. For LSME and LSME-I, we use their implementations in R package ``mediation"~\cite{tingley2014mediation}. For NEW-I and NEW-W, we use their implementations in the R package ``Medflex"~\cite{steen2017medflex}. For IPW, we use the implementation in the R package ``causal-weight"~\cite{bodory2018causalweight}. CEVAE\footnote{https://github.com/AMLab-Amsterdam/CEVAE} and  TEDVAE\footnote{https://github.com/WeijiaZhang24/TEDVAE} are open-source on GitHub, and CMAVAE is implemented in Pyro and PyTorch by us.

\begin{table*}[t]
	\centering
	\begin{subtable}[h]{0.95\textwidth}
		\centering
		{\small \begin{tabular}{cccccccccc}
				\specialrule{0.5pt}{0pt}{1pt}
				N      & 2k      & 3k      & 4k      & 5k      & 6k      & 7k      & 8k      & 9k      & 10k     \\ \specialrule{0.25pt}{-1.5pt}{1pt}
				LSEM   & \underline{7.1 ± 1.6} & 7.1 ± 1.3 & 7.3 ± 1.2 & 6.9 ± 1.3 & 7.4 ± 0.9 & 7.5 ± 0.9 & 7.2 ± 1.0 & 7.2 ± 0.7 & 7.4 ± 0.9 \\
				LSEM-I & 8.3 ± 1.8 & 7.9 ± 1.3 & 8.1 ± 1.3 & 7.8 ± 1.1 & 8.2 ± 1.0 & 8.6 ± 0.9 & 8.0 ± 1.0 & 8.1 ± 0.8 & 8.3 ± 0.8 \\
				NEW-I  & 10.9 ± 8.4 & 7.8 ± 6.7 & 7.9 ± 6.3 & 7.5 ± 5.5 & 7.6 ± 4.5 & 8.8 ± 5.9 & 6.8 ± 4.5 & 6.4 ± 4.2 & 8.2 ± 5.2 \\
				NEW-W  & 32.1 ± 12.0 & 21.5 ± 8.9 & 22.2 ± 7.7 & 20.3 ± 8.4 & 20.7 ± 9.6 & 16.8 ± 7.0 & 12.9 ± 5.0 & 18.6 ± 6.5 & 15.6 ± 6.2 \\
				IPW    & 7.5 ± 2.5 & 7.0 ± 1.7 & 7.7 ± 1.4 & 7.3 ± 1.4 & 7.5 ± 1.4 & 7.9 ± 1.4 & 7.3 ± 1.3 & 7.6 ± 1.0 & 8.0 ± 0.9 \\
				CMAVAE & 8.9 ± 5.0 & \underline{5.5 ± 2.3} & \underline{4.4 ± 2.2} & \underline{3.8 ± 2.0} & \underline{3.9 ± 1.9} & \underline{4.6 ± 1.7} & \underline{4.5 ± 2.1} & \underline{4.7 ± 1.4} & \underline{5.2 ± 2.0} \\
				DMAVAE & \textbf{4.2 ± 2.7} & \textbf{1.7 ± 1.3} & \textbf{1.7 ± 1.3} & \textbf{1.8 ± 1.4} & \textbf{1.8 ± 1.1} & \textbf{1.4 ± 0.8} & \textbf{1.7 ± 1.3} & \textbf{1.3 ± 1.0} & \textbf{1.5 ± 1.1} \\ \specialrule{0.5pt}{0pt}{-3pt}
		\end{tabular}}
		\caption{Estimation bias ($\%$) for the estimated $\mathrm{NDE}$.}
		\hfill
		\label{tab:NDE}
	\end{subtable}
	\begin{subtable}[h]{0.95\textwidth}
		\centering
		{\small \begin{tabular}{cccccccccc}
				\specialrule{0.5pt}{-4pt}{1pt}
				N      & 2k      & 3k      & 4k      & 5k      & 6k      & 7k      & 8k      & 9k      & 10k     \\ \specialrule{0.25pt}{-1.5pt}{1pt}
				LSEM   & \underline{11.0 ± 3.0} & 10.7 ± 2.2 & 10.6 ± 2.5 & 10.4 ± 2.7 & 11.0 ± 1.6 & 10.4 ± 1.9 & 10.8 ± 1.7 & 10.5 ± 1.8 & 11.1 ± 1.5 \\
				LSEM-I & 13.2 ± 3.1 & 12.1 ± 2.3 & 12.5 ± 2.4 & 12.2 ± 2.5 & 12.7 ± 1.8 & 12.1 ± 1.9 & 12.5 ± 1.6 & 12.2 ± 1.6 & 13.0 ± 1.5 \\
				NEW-I  & 11.7 ± 9.1 & 9.5 ± 7.6 & 10.0 ± 7.0 & 9.0 ± 6.6 & 10.5 ± 4.7 & 12.1 ± 5.6 & 9.1 ± 5.4 & 9.6 ± 4.1 & 11.2 ± 5.1 \\
				NEW-W  & 24.6 ± 9.3 & 18.7 ± 5.8 & 18.3 ± 7.2 & 20.8 ± 7.0 & 19.2 ± 6.4 & 16.6 ± 5.5 & 16.5 ± 5.7 & 13.5 ± 5.5 & 18.7 ± 6.4 \\
				IPW    & 58.0 ± 3.7 & 57.1 ± 3.6 & 58.2 ± 2.5 & 57.7 ± 2.5 & 57.7 ± 2.2 & 57.7 ± 1.9 & 57.6 ± 1.7 & 58.0 ± 2.0 & 58.1 ± 1.2 \\
				CMAVAE & 15.1 ± 8.7 & \underline{8.2 ± 5.0} & \underline{6.1 ± 4.1} & \underline{5.7 ± 3.5} & \underline{6.0 ± 2.9} & \underline{7.7 ± 3.7} & \underline{9.6 ± 3.3} & \underline{9.4 ± 3.3} & \underline{9.2 ± 3.4} \\
				DMAVAE & \textbf{6.4 ± 4.9} & \textbf{3.9 ± 2.6} & \textbf{5.1 ± 3.5} & \textbf{4.8 ± 3.1} & \textbf{4.2 ± 2.4} & \textbf{4.5 ± 2.5} & \textbf{4.3 ± 2.1} & \textbf{4.5 ± 2.3} & \textbf{4.6 ± 2.3} \\ \specialrule{0.5pt}{0pt}{-3pt}
		\end{tabular}}
		\caption{Estimation bias ($\%$) for the estimated $\mathrm{NIE_r}$.}
		\hfill
		\label{tab:NIEr}
	\end{subtable}
	\begin{subtable}[h]{0.95\textwidth}
		\centering
		{\small \begin{tabular}{cccccccccc}
				\specialrule{0.5pt}{-4pt}{1pt}
				N      & 2k      & 3k      & 4k      & 5k      & 6k      & 7k      & 8k      & 9k      & 10k     \\ \specialrule{0.25pt}{-1.5pt}{1pt}
				LSEM   & \textbf{1.1 ± 0.9} & \underline{1.0 ± 0.8} & 1.3 ± 0.8 & 1.1 ± 0.7 & \textbf{1.1 ± 0.6} & 1.4 ± 0.6 & \textbf{1.1 ± 0.5} & \underline{1.2 ± 0.5} & \textbf{1.1 ± 0.5} \\
				LSEM-I & \underline{1.2 ± 0.9} & 1.1 ± 0.7 & \textbf{1.2 ± 0.7} & 1.1 ± 0.6 & \underline{1.2 ± 0.7} & 1.5 ± 0.7 & 1.3 ± 0.6 & \underline{1.2 ± 0.5} & \textbf{1.1 ± 0.5} \\
				NEW-I  & 10.9 ± 8.6 & 8.2 ± 7.0 & 8.6 ± 6.4 & 7.9 ± 5.9 & 8.5 ± 4.6 & 9.9 ± 5.8 & 7.5 ± 4.8 & 7.3 ± 4.3 & 9.2 ± 5.2 \\
				NEW-W  & 27.9 ± 11.2 & 18.8 ± 7.5 & 19.0 ± 7.3 & 18.5 ± 7.9 & 18.3 ± 8.3 & 15.4 ± 5.6 & 11.6 ± 5.0 & 14.6 ± 5.7 & 14.9 ± 5.9 \\
				IPW    & 14.8 ± 1.4 & 14.8 ± 1.0 & 14.8 ± 0.6 & 14.9 ± 0.7 & 14.7 ± 0.7 & 14.5 ± 0.7 & 14.8 ± 0.5 & 14.8 ± 0.5 & 14.5 ± 0.5 \\
				CEVAE    & 24.1 ± 5.6 & 27.0 ± 4.3 & 28.1 ± 1.1 & 28.3 ± 1.0 & 28.1 ± 0.8 & 27.8 ± 0.8 & 27.9 ± 1.1 & 27.5 ± 0.8 & 27.7 ± 1.1 \\
				TEDVAE    & 28.3 ± 1.3 & 28.3 ± 1.3 & 27.9 ± 1.3 & 28.1 ± 0.9 & 27.9 ± 1.0 & 27.6 ± 0.8 & 28.3 ± 1.0 & 28.0 ± 0.9 & 27.9 ± 1.0 \\
				CMAVAE & \textbf{1.1 ± 0.9} & \textbf{1.0 ± 0.7} & \textbf{1.2 ± 0.7} & \underline{0.9 ± 0.7} & 1.2 ± 0.8 & \textbf{1.0 ± 0.6} & \underline{1.1 ± 0.8} & \textbf{0.9 ± 0.6} & \underline{1.1 ± 0.8} \\
				DMAVAE & \textbf{1.1 ± 0.9} & \textbf{1.0 ± 0.7} & \underline{1.3 ± 0.7} & \textbf{0.8 ± 0.7} & \underline{1.2 ± 0.7} & \underline{1.3 ± 0.9} & 1.3 ± 0.9 & 1.2 ± 0.8 & 1.2 ± 0.9 \\ \specialrule{0.5pt}{0pt}{-3pt}
		\end{tabular}}
		\caption{Estimation bias ($\%$) for the estimated TE.}
		\label{tab:TE}
	\end{subtable}
	\caption{Estimation bias for the estimated $\mathrm{NDE}$ ,$\mathrm{NIE_r}$ and $\mathrm{TE}$ on synthetic datasets. The best results are shown in \textbf{boldface}, and the runner-up results are \underline{underlined}.}
	\label{tab:Res}
\end{table*}

\subsection{Performance Evaluation} \label{sec:syn} We firstly generate synthetic datasets based on the causal graph in Figure~\ref{pic:intro2} (hence we know the ground truth) to evaluate the performance of our methods, in comparison with the baselines. We follow the same procedure in~\cite{LouizosSMSZW17, ChengG022a} to generate synthetic datasets with a variety range of sample sizes (denoted as $\mathrm{N}$): 2k, 3k, 4k, 5k, 6k, 7k, 8k, 9k and 10k.

To avoid the bias brought by data generation, we repeatedly generate 30 datasets for each sample size. The results for the mean and the standard deviation of estimation bias for $\mathrm{NDE}$, $\mathrm{NIE_r}$ and TE are shown in Table~\ref{tab:Res}. 

\subsubsection{Results} DMAVAE achieves the lowest estimation bias in both $\mathrm{NDE}$ and $\mathrm{NIE_r}$ compared with the baseline methods as shown in Tables~\ref{tab:NDE} and~\ref{tab:NIEr}. We note that DMAVAE performs clearly better than CMAVAE under the current data generation mechanism. A possible explanation is that since there are three types of latent confounders, but CMAVAE only learns one type of representation. As we mentioned above, different adjustment sets are needed for different estimation tasks, so only considering one type of representation can lead to a relatively large bias in estimating $\mathrm{NDE}$ and $\mathrm{NIE_r}$ in this case.

In Table~\ref{tab:TE}, LSEM, LSEM-I, CMAVAE and DMAVAE show better performance for estimating TE compared with the other baseline methods. With the increase in sample size, CMAVAE shows lower estimation bias than DMAVAE. The reason could be that estimating TE requires the representation of all confounders, as DMAVAE disentangles three types of representations, it may produce more cumulative errors compared to CMAVAE, which only assumes one type of latent confounders. 

\subsection{Evaluation on Generalisability}
We also conduct experiments on multiple cases to verify the effectiveness and the generalisation ability of DMAVAE to data generated from structures with different types of latent confounders. We note that among the baseline methods used previously, only CMAVAE can achieve a similar level of estimation bias as our methods for $\mathrm{NDE}$ and $\mathrm{NIE_r}$. Hence, we select CMAVAE as the compared model in this part of the evaluation.

We have already compared DMAVAE and CMAVAE under the latent confounder assumption shown in Figure~\ref{pic:intro2}. We have seen that our proposed DMAVAE method outperforms CMAVAE in estimating $\mathrm{NDE}$ and $\mathrm{NIE_r}$ and achieves similar performance in estimating $\mathrm{TE}$. In this section, we firstly compare DMAVAE and CMAVAE with the data generated based on the causal graph shown in Figure~\ref{pic:intro3}. The results are shown in Table~\ref{tab:compare}. We note that DMAVAE has a higher estimation bias compared to the results in Table~\ref{tab:Res}, while CMAVAE achieves a lower estimation bias than that in Table~\ref{tab:Res}. Such results are reasonable since the data generation mechanism fully complies with the assumption of CMAVAE (i.e., there only exists one type of confounder that affects ${T}$, ${M}$ and ${Y}$ simultaneously). However, we find that even under this assumption, DMAVAE achieves lower estimation bias in some datasets compared to CMAVAE, which implies that DMAVAE has good generalisation ability.

Furthermore, we consider multiple cases when comparing DMAVAE and CMAVAE in terms of their generalisability. Similar to the job market example mentioned in Section 1, the causal graph for real applications may contain one type or two types or all the three types of latent confounders. When considering the different combinations of the three types of latent confounders, we get six cases (indicated in Figure~\ref{pic:Ablation} as Case 1 to Case 6). The datasets for each case are generated separately following the respective causal graphs. For Case 1 to Case 3, we assume only one type of latent confounders exists. For Case 4 to Case 6, we assume there exist two types of latent confounders. The results for the six cases are shown in Figure~\ref{pic:Ablation}. From the figure, we see that DMAVAE outperforms CMAVAE in all 6 cases. This indicates that DMAVAE has better generalisation ability, and can deal with different cases in practice more effectively than CMAVAE.

\begin{table}[t]
	\centering
	{\small \begin{tabular}{ccccc}
			\specialrule{0.5pt}{-4pt}{1pt}
			Model & \multicolumn{2}{c}{DMAVAE} & \multicolumn{2}{c}{CMAVAE} \\ \specialrule{0.25pt}{-1.5pt}{1pt}
			N     & $\mathrm{NDE}$          & $\mathrm{NIE_r}$         & $\mathrm{NDE}$          & $\mathrm{NIE_r}$         \\ \specialrule{0.25pt}{-1.5pt}{1pt}
			2k    & 8.5 ± 4.8    & \textbf{7.4 ± 5.5}   & \textbf{5.2 ± 4.4}    & 19.9 ± 8.8   \\
			3k    & 4.6 ± 3.6    & \textbf{5.4 ± 3.5}   & \textbf{2.6 ± 2.3}    & 18.7 ± 9.3   \\
			4k    & 4.3 ± 2.4    & \textbf{4.5 ± 3.5}   & \textbf{2.9 ± 2.1}    & 9.9 ± 5.8   \\
			5k    & 4.3 ± 2.1    & \textbf{4.5 ± 2.6}   & \textbf{3.8 ± 2.2}    & 4.8 ± 1.0   \\
			6k    & \textbf{4.2 ± 2.0}    & \textbf{4.8 ± 3.5}   & 4.3 ± 2.5    & 4.8 ± 4.1   \\
			7k    & \textbf{3.6 ± 1.9}    & \textbf{4.9 ± 3.2}   & 3.8 ± 2.0    & 5.4 ± 3.8   \\
			8k    & 3.9 ± 1.7    & \textbf{4.6 ± 2.6}   & \textbf{3.5 ± 1.7}    & 4.7 ± 4.4   \\
			9k    & 4.1 ± 1.8    & 4.7 ± 2.8   & \textbf{3.9 ± 1.8}    & \textbf{3.9 ± 3.9}   \\
			10k   & \textbf{4.0 ± 1.9}    & 4.7 ± 3.0   & \textbf{4.0 ± 1.8}    & \textbf{3.8 ± 3.2}   \\ \specialrule{0.5pt}{0pt}{-3pt}
	\end{tabular}}
	\caption{Estimation bias ($\%$) for the estimated $\mathrm{NDE}$ and $\mathrm{NIE_r}$ with data generated based on the causal graph shown in Figure~\ref{pic:intro3}. The best results are shown in \textbf{boldface}.}
	\label{tab:compare}
\end{table}

\begin{figure}[t]
	\includegraphics[scale=0.19]{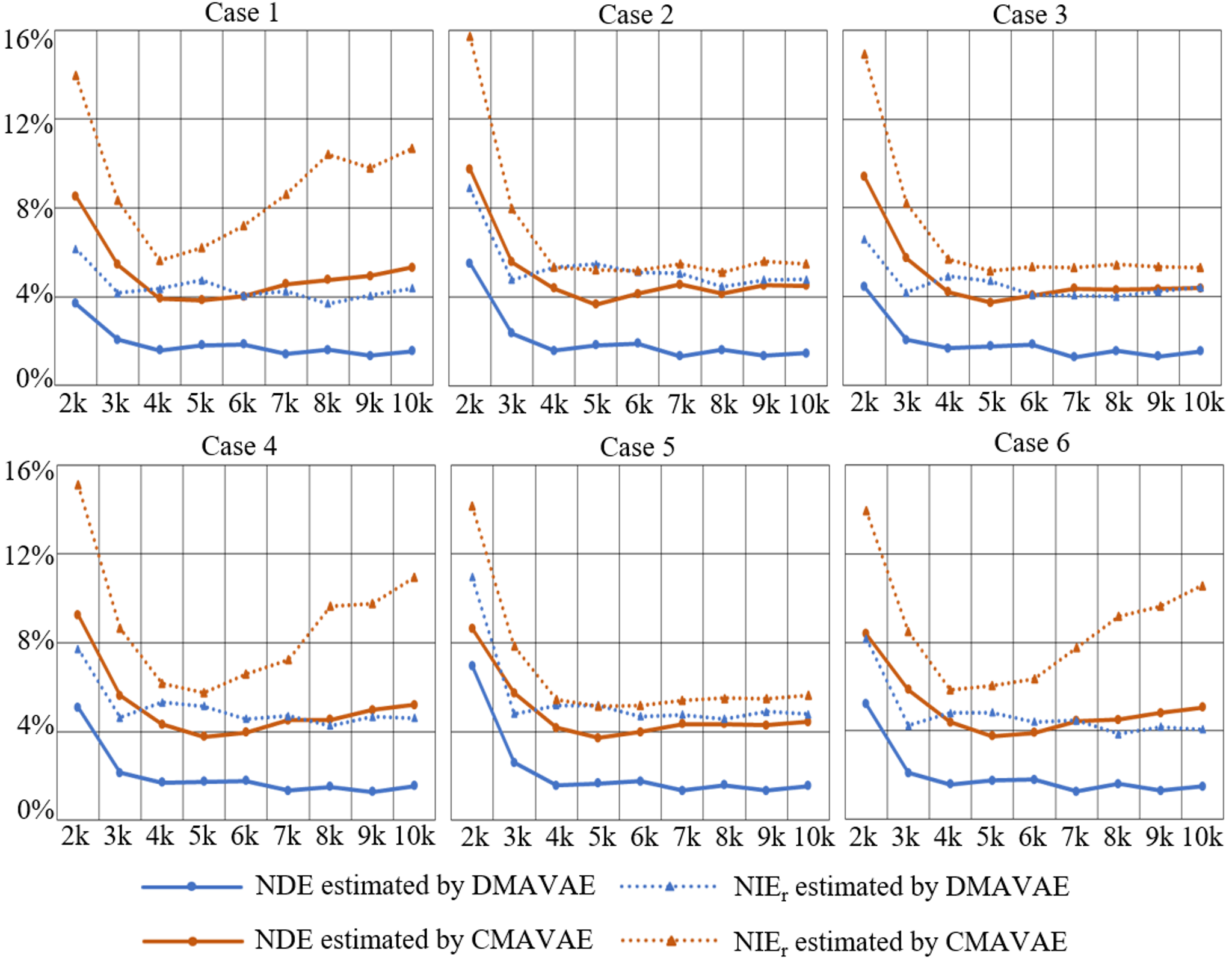}
	\caption{Results for the six cases. The horizontal axis indicates sample size ($\mathrm{N}$) and the vertical axis indicates estimation bias ($\%$).}
	\label{pic:Ablation}
\end{figure}

\subsection{Case Study}
\subsubsection{Adult Dataset} 
To illustrate the potential of DMAVAE for real-world applications, we show that DMAVAE can be used for detecting discrimination. We apply DMAVAE on the real-world dataset Adult. The dataset is retrieved from the UCI repository ~\cite{asuncion2007uci} and it contains 14 attributes including personal, educational and economic information for 48842 individuals. We use the sensitive attribute gender as ${T}$, occupation as ${M}$, income as ${Y}$ and all the other attributes as the proxy attributes ${X}$.

For causality-based discrimination detection, $\mathrm{NDE}$ indicates the causal effect that the sensitive attribute directly has on the outcome and $\mathrm{NIE}$ refers to the causal effect through the mediator on the outcome. Hence the approach detects discrimination in a dataset by estimating $\mathrm{NDE}$ and $\mathrm{NIE}$. There is direct discrimination when the NDE is greater than a given threshold $\tau$, and there exists indirect discrimination when the mediator is also considered as a sensitive attribute and the $\mathrm{NIE}$ is greater than $\tau$~\cite{zhang2018causal}. 

We obtain that $\mathrm{NDE} = 0.804$ and $\mathrm{NIE} = 0.002$ by applying DMAVAE on the Adult dataset ($\tau = 0.05$). The results indicate that there is significant direct discrimination against gender, while the discrimination against gender through occupation is ignorable.

\section{Related Work}
\label{sec:RelatedWork}
Mediation analysis has its origins in structural equation models (SEMs), and the specific methods of CMA can be traced back to path analysis~\cite{wright1934method}, which has been widely used in social sciences for decades~\cite{baron1986moderator,bollen1989structural}. Under the linear models, causal effects are expressed as the coefficients of structural equations. These coefficients are defined, identified, and estimated based on the assumption of sequential ignorability and commitment to a specific distribution. Specifically, LSEM is a well-established method based on linear SEMs for mediation analysis~\cite{baron1986moderator,james1983causal,judd1981process,mackinnon2012introduction,mackinnon1993estimating}. Although LSEM is extremely simple to use, the additional assumptions are often not satisfied in reality, leading to biased estimation~\cite{huber2016finite,rudolph2019causal}. 

The above situation has been improved in recent decades. Counterfactual thinking in statistics~\cite{rubin1974estimating,holland1988causal} and epidemiology~\cite{robins1992identifiability}, together with the formal definitions based on non-parametric structural equations provide theoretical support for mediation analysis to be expanded from linear to nonlinear models. With counterfactual thinking, defining direct and indirect effects requires no commitment to the distributional forms, and thus applies under functions with arbitrary nonlinear interactions, and both continuous and categorical variables~\cite{pearl2014interpretation, rubin2005causal}. For example, \citet{imai2010general} extended LSEM and proposed a general approach that brings the definition, identification, estimation, and sensitivity analysis of causal mediation effects closely together within a single method; \citet{huber2014identifying} demonstrated that the applicability of inverse probability weighting depends on whether some confounders are themselves influenced by the treatment or not. For more details on mediation analysis, please refer to the papers~\cite{mackinnon2007mediation, vanderweele2016mediation}.

In recent years, deep learning has demonstrated powerful ability in fitting non-linear models, and researchers are devoted to estimating causal effects through deep learning techniques. CEVAE~\cite{LouizosSMSZW17} is the first method that uses VAE to learn the representation of latent confounders and adjust for confounding bias using the learned representation. \citet{ChengG022a} used VAE-based techniques for mediation analysis and proposed CMAVAE that can learn the representation of latent confounders to estimate direct and indirect effects under the sequential ignorability assumption~\cite{imai2010general}. 

However, CMAVAE assumes that there is only one type of latent confounders, which is not applicable to the different cases in practice. Our work improves CMAVAE. We extend the piecemeal deconfounding assumption to disentangled representation learning. The goal is to achieve more accurate estimation results and enhance the generalisation ability of the method.

\section{Conclusion}
\label{sec:Conclusion}
This work investigates an important causal mediation analysis problem, i.e., how to achieve accurate estimation of NDE, NIE and TE. We use a relaxed assumption, piecemeal deconfounding rather than sequential ignorability for causal mediation analysis. We extend the piecemeal deconfounding assumption to representation learning in a general case with three types of latent confounders. Then, we introduce DMAVAE, a VAE-based method that can learn disentangled representations from proxy attributes for estimating NDE, NIE and TE. With extensive experiments, we show that the proposed method significantly outperforms other mediation analysis methods on synthetic datasets. We demonstrate that DMAVAE has a strong generalisation ability and can be applied to different cases. Furthermore, we verify the ability of DMAVAE on a real-world application. For future work, we will apply this model to scenarios of interest in mediation analysis to help understand mechanisms or guide policy formulation.

\section{Acknowledgments}
This work has been partially supported by Australian Research Council (DP200101210), Natural Sciences and Engineering Research Council of Canada and University Presidents Scholarship (UPS) of University of South Australia.

\bibliography{DMAVAE}

\end{document}